\newtheorem{thm}{Theorem}[section]
\newtheorem{rem}{Remark}
\newtheorem{assume}{Assumption}
\DeclareMathOperator*{\argmin}{argmin}
\title{\LARGE \bf Differentiable Optimization Based Time-Varying \\ Control Barrier Functions for Dynamic Obstacle Avoidance}
\author{Bolun Dai$^{1}$, Rooholla Khorrambakht$^{1}$, Prashanth Krishnamurthy$^{1}$, Farshad Khorrami$^{1}$
\thanks{This work was supported in part by the NYUAD Center for Artificial Intelligence and Robotics (CAIR), funded by Tamkeen under the NYUAD Research Institute Award CG010.}
\thanks{$^{1}$Bolun Dai, Rooholla Khorrambakht, Prashanth Krishnamurthy, and Farshad Khorrami are with Control/Robotics Research Laboratory, Electrical~\&~Computer Engineering Department, Tandon School of Engineering, New York University, Brooklyn, NY 11201
{\tt\footnotesize \{bolundai, rk4342, prashanth.krishnamurthy, khorrami\}@nyu.edu}}%
}
\begin{document}
\maketitle
\thispagestyle{empty}
\pagestyle{empty}
\begin{abstract}
   Control barrier functions (CBFs) provide a simple yet effective way for safe control synthesis. Recently, work has been done using differentiable optimization (diffOpt) based methods to systematically construct CBFs for static obstacle avoidance tasks between geometric shapes. In this work, we extend the application of diffOpt CBFs to perform dynamic obstacle avoidance tasks. We show that by using the time-varying CBF (TVCBF) formulation, we can perform obstacle avoidance for dynamic geometric obstacles. Additionally, we show how to extend the TVCBF constraint to consider measurement noise and actuation limits. To demonstrate the efficacy of our proposed approach, we first compare its performance with a model predictive control based method and a circular CBF based method on a simulated dynamic obstacle avoidance task. Then, we demonstrate the performance of our proposed approach in experimental studies using a 7-degree-of-freedom Franka Research 3 robotic manipulator.
\end{abstract}
\section{Introduction}
With robotic systems being deployed in more complicated and dynamic scenarios, it is crucial to ensure their safe interaction with their surroundings~\cite{DaiKPK23, RawlingsMD17, DaiKPK21}. Model predictive control (MPC) based methods~\cite{RawlingsMD17} are widely used for safety-critical tasks. However, the computation time for MPC-based methods greatly increases when dealing with nonlinear system dynamics and nonconvex safety constraints. Recently, control barrier functions (CBFs)~\cite{AmesCENST19} have gained popularity in ensuring safety in robotic applications. CBF-based methods transform nonlinear and nonconvex safety constraints into linear ones~\cite{AmesXGT17, DaiKK22, DaiHKK23}, which makes them more suitable for complex safe set geometries.

CBFs have been previously used for operational space control~\cite{MurtazaAWH22}, teleoperation~\cite{XuS18}, and kinematic control~\cite{SingletaryKA22}. However, in~\cite{MurtazaAWH22, XuS18, SingletaryKA22}, the CBF is formulated between a point and a geometric shape, which makes it challenging to apply these methods to tasks where both the robot and the obstacles cannot be modeled as points. To enable CBF-based methods to consider the geometry of the robot and its environment, in~\cite{ThirugnanamZS22}, a duality-based convex optimization based approach was used to compute CBFs between polytopes. In~\cite{SingletaryGMSA22}, the CBF is formulated using the signed-distance function (SDF) between two geometric shapes. However, since SDFs are not globally continuously differentiable, \cite{SingletaryGMSA22} used an approximation of the SDF-based CBF, resulting in a conservative constraint that reduces the feasible set. An alternative solution is provided in~\cite{DaiKKGTK23}, where a differentiable optimization (diffOpt) based CBF is used, which recovers the entire safe set. However, the aforementioned works only consider time-invariant safe sets and cannot be directly deployed for problems with time-varying safe sets, such as dynamic obstacle avoidance tasks.

\begin{figure}[t!]
    \centering
    \includegraphics[width=0.45\textwidth]{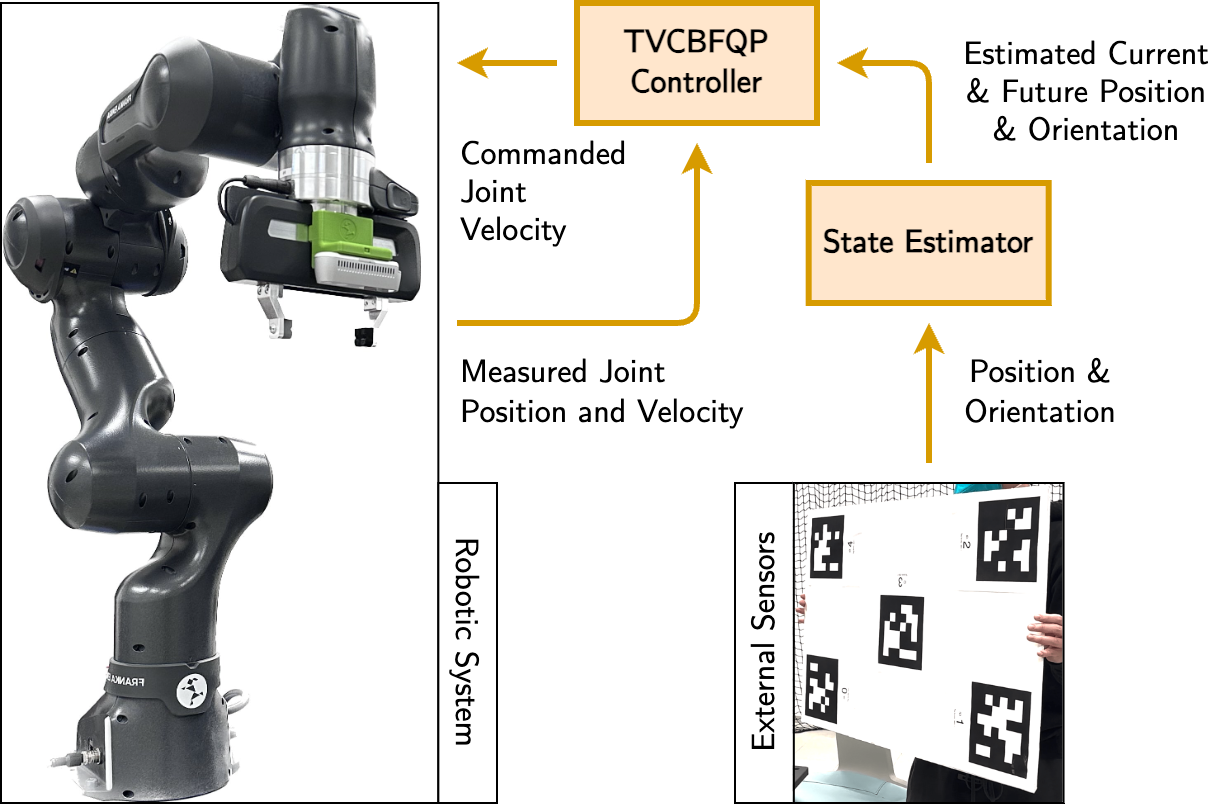}
    \caption{Control pipeline for using time-varying diffOpt CBFs for dynamic obstacle avoidance.}
    \label{fig:tvcbf_open}
\end{figure}

The ability to avoid dynamic obstacles is crucial when deploying robotic systems in the real world. Early works studying dynamic obstacle avoidance tasks proposed using velocity obstacles~\cite{FioriniS98} and dynamic windows~\cite{FoxBT97}. More recently, work has been done using MPC-based methods that utilize adaptively updated half-space constraints~\cite{WhiteJWH22}. However, in addition to the limitations of MPC-based approaches, \cite{WhiteJWH22} models both the robot and the obstacle as spheres, which may generate overly conservative motions when the robot and obstacle geometry consists of shapes that cannot be closely approximated as spheres, e.g., the links of robot manipulators and flat boards. Work has also combined MPC and CBF-based methods for dynamic obstacle avoidance~\cite{JianYLLLWL23}. Similar to~\cite{WhiteJWH22}, the robot and obstacle geometries are limited to spheres and ellipsoids.

This paper extends the diffOpt CBF approach to handle time-varying safe sets and applies the proposed method to dynamic obstacle avoidance tasks. The overall control pipeline is illustrated in Fig.~\ref{fig:tvcbf_open}. Compared to existing methods for dynamic obstacle avoidance, the diffOpt-based formulation~\cite{TracyHM22} in our proposed method enables the consideration of a wider variety of obstacle and robot geometries. The main contribution of this paper is as follows: (1) we extended the diffOpt CBF formulation proposed in~\cite{DaiKKGTK23} to settings with time-varying safe sets and showed that this extension enables the application to dynamic obstacle avoidance tasks; (2) proposed a systematic approach to address noise in dynamic obstacle position measurements through a novel formulation of a ``most unsafe'' configuration within a certain Mahalanobis distance of the measurement distribution (in~\cite{DaiKKGTK23}, the poses of the static obstacles are known beforehand); (3) proposed an approach to address actuation limits by introducing an additional scaling factor $a_v$ computed from the relative velocity and a hyperparameter $b$ that is tuned based on the actuation limits (in~\cite{DaiKKGTK23}, actuation limits are not considered); (4) performed extensive experiments in both simulations and on a seven degree-of-freedom (DOF) FR3 robotic manipulator to demonstrate the efficacy of our proposed method. The remainder of this paper is structured as follows. Section~\ref{sec:preliminaries} briefly reviews CBFs and diffOpt CBFs. In Section~\ref{sec:problem_formulation}, the safe robotic control problem for dynamic obstacle avoidance is formulated. In Section~\ref{sec:method}, we present the time-varying diffOpt CBF formulation and how to account for state estimation uncertainty and actuation limits. In Section~\ref{sec:experiments}, we show the efficacy of our approach by comparing our proposed approach with an MPC-based method and other CBF-based methods on a dynamic obstacle avoidance task with a non-ellipsoid obstacle in simulation. Also, we perform three dynamic obstacle avoidance tasks in the real world using the 7-DOF Franka Research~3 (FR3) robotic arm. Finally, in Section~\ref{sec:conclusion}, we conclude the paper with a discussion on future directions.

\section{Preliminaries}
\label{sec:preliminaries}

In this section, we provide a brief introduction to CBFs and diffOpt CBFs.

\subsection{Control Barrier Function}
Consider a control affine system
\begin{equation}
    \dot{x} = F(x) + G(x)u
\label{eq:control_affine_sys}
\end{equation}
where the state is represented as $x \in \mathbb{R}^n$, the control as $u \in \mathbb{R}^m$, the drift as $F: \mathbb{R}^n \rightarrow \mathbb{R}^n$, and the control matrix as $G: \mathbb{R}^n \rightarrow \mathbb{R}^{n \times m}$. Consider two sets $\mathcal{C}$ and $\mathcal{D}$ with the relationship $\mathcal{C} \subset \mathcal{D} \subset \mathbb{R}^n$. Let a continuously differentiable function $h:\mathcal{D} \rightarrow \mathbb{R}$ have $\mathcal{C}$ as its 0-superlevel set, i.e., $\mathcal{C} = \{x \mid h(x) \geq 0, x\in\mathbb{R}^n\}$ and $\partial h/\partial x \neq 0$ for all $x\in\partial\mathcal{C}$, where $\partial\mathcal{C} \subset \mathbb{R}^n$ represents the boundary of $\mathcal{C}$. Then, if 
\begin{equation}
    \sup_{u\in\mathcal{U}}\Big[\frac{\partial h(x)}{\partial x}\Big(F(x) + G(x)u\Big)\Big] \geq -\Gamma(h(x))
    \label{eq:CBF_constraint}
\end{equation}
holds for all $x\in\mathcal{D}$ with $\Gamma: \mathbb{R}\rightarrow\mathbb{R}$ being an extended class $\mathcal{K}_\infty$ function\footnote{Extended class $\mathcal{K}_\infty$ functions are strictly increasing with $\Gamma(0) = 0$.}, $h$ is a CBF on $\mathcal{C}$. 

\subsection{Differentiable Optimization Based Control Barrier Function Formulation}
In~\cite{DaiKKGTK23}, a diffOpt CBF is constructed using an optimization problem that finds the minimum uniform scaling factor of two convex objects under which they collide~\cite{TracyHM22, GilbertO94, OngG96}
\begin{align}
\label{eq:differentiable_collision}
    \min_{p, \alpha}\ &\ \alpha\\
    \mathrm{subject\ to}\ &\ p \in \mathcal{P}_A(\alpha, \psi_A), p \in \mathcal{P}_B(\alpha, \psi_B), \alpha > 0\nonumber
\end{align}
where $\alpha\in\mathbb{R}_+$ is the uniform scaling factor, $p\in\mathbb{R}^3$ is the point of intersection after scaling the two convex objects with the scaling factor $\alpha$, $\psi_A, \psi_B \in \mathrm{SE}(3)$ represents the position and orientation of $A$ and $B$, respectively, and $\mathcal{P}_A, \mathcal{P}_B: \mathbb{R}_+ \times\mathrm{SE}(3) \rightrightarrows \mathbb{R}^3$ represent the two convex objects after applying a scaling factor of $\alpha$. Define the solution to~\eqref{eq:differentiable_collision} as $\alpha^\star$ and $p^\star$. It can be seen that the two objects are not in collision when $\alpha^\star > 1$, while the objects are either touching or colliding when $\alpha^\star \leq 1$. If $A$ is the robot and $B$ is the obstacle, then the corresponding CBF has the form of
\begin{equation}
    \label{eq:diff_opt_cbf}
    h(x, \psi_B) = \alpha^\star(x, \psi_B) - \beta.
\end{equation}
The $\beta$ value corresponds to an offset to the scaling factor. Given that safety is ensured when $\alpha^\star - 1 > 0$, we set $\beta \geq 1$. The larger $\beta$ is, the more conservative the CBF is. Note that since $\psi_A$ is a function of the state of the robot $x$, we write $h(x, \psi_B)$ instead of $h(\psi_A, \psi_B)$. Additionally, based on the implicit function theorem~\cite{Dini07}, the diffOpt problem in~\eqref{eq:differentiable_collision} computes $\partial\alpha/\partial\psi_A$, which is used to compute the partial derivative of the CBF in~\eqref{eq:CBF_constraint}:
\begin{equation}
    \frac{\partial h}{\partial x}(x, \psi_B) = \frac{\partial\alpha^\star}{\partial \psi_A}(x, \psi_B)\frac{\partial\psi_A}{\partial x}(x).
\end{equation}
\section{Problem Formulation}
\label{sec:problem_formulation}
Let the robot dynamics be defined as~\eqref{eq:control_affine_sys}. For obstacle avoidance tasks with a single obstacle, the safe set $\mathcal{C}$ is defined as the complement set of the interior and surface of the obstacle. For dynamic obstacles, the safe set would be time-varying, which is denoted as $\mathcal{C}(t)$. When there exist $N \in \mathbb{Z}_+$ obstacles, the safe set at time $t$ can be written as
\begin{equation}
    \mathcal{C}(t) = \bigcap_{i=1}^{N}{\mathcal{C}_i(t)}
\end{equation}
where $C_i(t)$ represents the safe set with respect to the $i$-th obstacle at time $t$. Let the robot state at $t = 0$ be $x_0 \in \mathcal{C}(0)$. This work aims to find a systematic approach that ensures the robot does not collide with any of the obstacles, i.e., $x(t) \in \mathcal{C}(t)$ for $t > 0$. 

\begin{rem}
\label{rem:obstacle_motion}
Finding a control sequence that ensures the robot's safety is not always possible. For example, if the robot has a fixed base, no control sequence will be able to make the robot avoid an obstacle that hits the fixed base. Another example is when the robot is significantly less agile than the obstacle. \textbf{This work does not consider such cases. We assume there exists a control sequence within the robot's control authority that guarantees safety}.
\end{rem}
\section{Method}
\label{sec:method}
In this section, we present the proposed method for performing dynamic obstacle avoidance using a diffOpt-based time-varying CBF (TVCBF). 

\subsection{Motivating Example -- Moving Circles}
\begin{figure}[t!]
    \centering
    \includegraphics[width=0.45\textwidth]{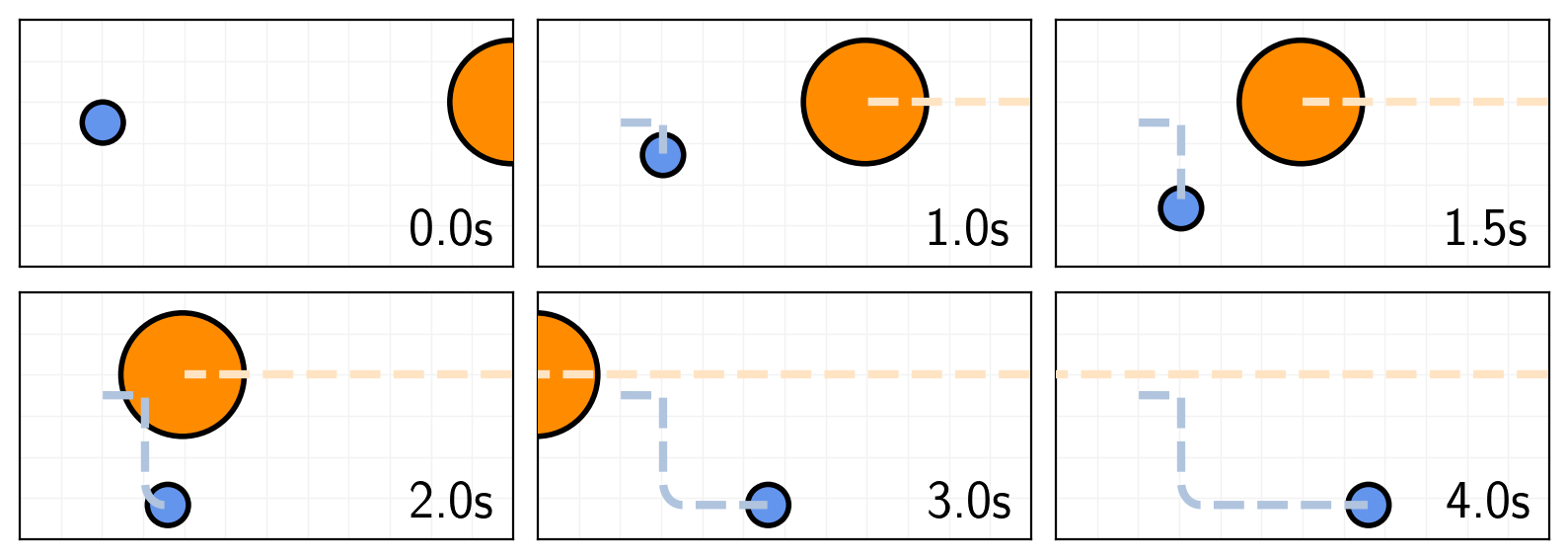}
    \caption{This figure illustrates the motion generated by the TVCBFQP controller. The robot we control is in blue. The obstacle is in orange. The light blue and orange dashed lines represent the path the robot and the obstacle traveled, respectively. The timestamp of each snapshot is given in the lower right corner of each figure. Each small grid is $2~\si{m}\times2~\si{m}$.}
    \label{fig:moving_circle_visualization}
\end{figure}
A motivating example is provided to aid in the presentation of our method, which will be used throughout Section~\ref{sec:method}. The example consists of a circular robot with radius $R_r = 0.5~\si{m}$ and a circular obstacle with radius $R_o = 1.5~\si{m}$. Define the position of the robot and the obstacle as $p_r\in\mathbb{R}^3$ and $p_o\in\mathbb{R}^3$, respectively. Set the initial position of the robot and the obstacle to $p_r^0 = (-5.0, -0.5, 0.0)$ and $p_o^0 = (5.0, 0.0, 0.0)$, respectively. Define the velocity of the obstacle as $v_o \in \mathbb{R}^3$, which is along the negative $x$ direction. The dynamics of the robot is
\begin{equation}
\label{eq:circle_robot_dyn}
    \dot{p}_r = \begin{bmatrix}
        \dot{p}_{r, x} & \dot{p}_{r, y} & \dot{p}_{r, z}
    \end{bmatrix}^T = \begin{bmatrix}
        u_x & u_y & 0
    \end{bmatrix}^T
\end{equation}
with $u \in \mathcal{U} = \{(u_x, u_y) \mid u_x \in \mathbb{R}_+, u_y \in \mathbb{R}\}$ representing the agent's control command. The robot's goal is to move along the positive $x$ direction while avoiding collision with the obstacle. For this moving circles example, the constraints in~\eqref{eq:differentiable_collision} have the form of~\cite{TracyHM22}
\begin{equation}
    \begin{bmatrix}
        0\\
        -r_i
    \end{bmatrix} - \begin{bmatrix}
        \mathbf{0}_{1\times3} & -1\\
        -\mathbf{I}_{3\times3} & \mathbf{0}_{3\times1}
    \end{bmatrix}\begin{bmatrix}
        p\\
        \alpha
    \end{bmatrix} \in \mathcal{Q}_4,\ i = \{A, B\}
\end{equation}
where $r_i \in \mathbb{R}^3$ represents the position of object $i$ and the four-dimensional second-order cone $\mathcal{Q}_4$ is defined as $\mathcal{Q}_4 = \{[t,\ x^T] \mid x \in \mathbb{R}^3, t \in \mathbb{R}, \|x\|_2 \leq t\}$.

\subsection{Time-Varying Control Barrier Function}
\label{sec:method-tvcbf}
In~\cite{DaiKKGTK23}, the safe set $\mathcal{C}$ is assumed to be time-invariant. When the safe set is time-varying, TVCBFs would need to be used. To emphasize that TVCBFs are computed with respect to a time-varying safe set, they are written as $h(x, \psi(t))$, where $\psi(t)$ denotes the configuration of the safe set at time $t$. The TVCBF constraint~\cite{Xu18} has the form of 
\begin{equation}
\label{eq:tvcbf-constraint}
    \frac{\partial h}{\partial x}(x, \psi(t))\dot{x} + \frac{\partial h}{\partial t}(x, \psi(t)) \geq -\gamma h(x, \psi(t)).
\end{equation}
with $\gamma\in\mathbb{R}_+$ corresponding to how fast the CBF value is allowed to change, the larger the $\gamma$ value, the faster the CBF value can change. For robotics tasks, to apply the diffOpt TVCBF, both the robot and the obstacle must be encapsulated by convex primitive shapes. Define the number of primitives assigned to the robot and obstacle as $n_r\in\mathbb{N}$ and $n_o\in\mathbb{N}$, respectively. Then, there will be $n_r \times n_o$ TVCBF constraints. Adopting the CBF-based quadratic program (CBFQP) formulation, we write the time-varying CBFQP (TVCBFQP) as
\begin{align}
\label{eq:tvcbfqp}
    \min_{u\in\mathcal{U}}\ &\ \|u - u_\mathrm{ref}\|\ \mathrm{subject\ to}\ \frac{\partial h_{ij}}{\partial x}\dot{x} + \frac{\partial h_{ij}}{\partial t} \geq -\gamma h_{ij}
\end{align}
where $i = 1, \cdots, n_r$, $j = 1, \cdots, n_o$, $u_\mathrm{ref}\in\mathbb{R}^m$ is the reference control action, $\mathcal{U}\subset\mathbb{R}^m$ is the admissible set of controls (control amplitude limits), and $h_{ij}$'s dependency on $(x_i, \psi_j(t))$ is omitted for brevity. The $\partial h/\partial t$ term is estimated using a finite difference approach with a state estimator. Similarly, $\partial h/\partial t$ can also be computed as $(\partial h / \partial \psi)(\partial \psi / \partial t)$ where $\partial h / \partial \psi$ is obtained from~\eqref{eq:differentiable_collision} and $\partial \psi / \partial t$ is obtained from a state estimator. To show the effectiveness of diffOpt TVCBFQPs, we apply it to solving the Moving Circles example. When $v_o = [-4.0, 0.0, 0.0]^T$ and $u_\mathrm{ref} = [2, 0]^T$, the result is shown in Fig.~\ref{fig:moving_circle_visualization}. It may be noted that the robot avoids collision with the obstacle while continuing its movement along the positive $x$ direction. 
\begin{rem}
For the Moving Circles example, if we replace the diffOpt \textbf{TVCBF} with the diffOpt \textbf{CBF} proposed in~\cite{DaiKKGTK23}, it is observed that the robot will collide with the obstacle.
\end{rem}

\subsection{Measurement Noise}
\label{sec:measurement_noise}
To implement our TVCBFQP controller, we need to predict the motion of the obstacles online. Any sensor providing such measurements will be corrupted by measurement noise. If not considered, the measurement noise may lead to safety violations of the robotic system. We utilize an Extended Kalman Filter (EKF) to provide a probabilistic estimation of the state in the form of a Gaussian distribution $\mathcal{N}(\mu, \Sigma)$ with $\mu\in\mathbb{R}^{n_m}$, $\Sigma\in\mathbb{R}^{n_m \times n_m}$, and $n_m\in\mathbb{N}$ being the dimension of $\mu$. Denote the Mahalanobis distance~\cite{Mahalanobis18} $d_M$ for a point $y$ with respect to to $\mathcal{N}(\mu, \Sigma)$ as
\begin{equation}
    d_M(y, \mu, \Sigma) = \sqrt{(y - \mu)^T\Sigma^{-1}(y - \mu)} \in \mathbb{R}_+
\end{equation}
where $y\in \mathbb{R}^{n_m}$ and $d_M: \mathbb{R}^{n_m} \times \mathbb{R}^{n_m} \times \mathbb{R}^{n_m \times n_m} \rightarrow \mathbb{R}_+$. Denote the probability of sampling a random variable from $\mathcal{N}$ that has a Mahalanobis distance smaller or equal to $k\in\mathbb{R}_+$ as $\mathrm{Prob}_{k}\in[0, 1]$ and the part of the state space of $\mathbb{R}^{n_m}$ that has a Mahalanobis distance smaller or equal to $k$ as $\mathcal{S}_{k}$.

\begin{thm}
For the CBF defined in~\eqref{eq:diff_opt_cbf}, let
\begin{align}
\label{eq:min_psi_opt}
\Tilde{\psi} =\ \argmin_{\psi}\ &\ {h(x, \psi)}\ \mathrm{subject\ to}\ \psi\in\mathcal{S}_{k}.
\end{align}
Then, the solution to the TVCBFQP with $h(x, \Tilde{\psi})$ in the TVCBF constraint also guarantees safety for all $\psi \in \mathcal{S}_{k}$.
\end{thm}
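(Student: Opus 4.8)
The plan is to establish the claim through a worst-case argument layered on top of the forward-invariance property of time-varying CBFs. The essential observation is that $\Tilde{\psi}$, being the minimizer of $h(x, \psi)$ over the confidence set $\mathcal{S}_{k}$, yields a pointwise lower bound: for any fixed state $x$ and any $\psi \in \mathcal{S}_{k}$, we have $h(x, \psi) \geq h(x, \Tilde{\psi})$. Consequently, if the controller keeps $h(x, \Tilde{\psi})$ nonnegative, then every admissible obstacle configuration in $\mathcal{S}_{k}$ is automatically rendered safe, since by \eqref{eq:diff_opt_cbf} the condition $h \geq 0$ is equivalent to $\alpha^\star \geq \beta \geq 1$, i.e., non-collision.

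First I would invoke the forward-invariance property of the TVCBF constraint \eqref{eq:tvcbf-constraint}. Writing $\bar{h}(t) = h(x(t), \Tilde{\psi}(t))$ for the CBF value evaluated at the worst-case configuration along the closed-loop trajectory, feasibility of the TVCBFQP (guaranteed by the assumption in Remark~\ref{rem:obstacle_motion}) ensures that $\frac{\partial h}{\partial x}\dot{x} + \frac{\partial h}{\partial t} \geq -\gamma \bar{h}$ holds for all $t$. Recognizing the left-hand side as the total time derivative $\dot{\bar{h}}$, the comparison lemma yields $\bar{h}(t) \geq \bar{h}(0)\,e^{-\gamma t}$. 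Since the initial state satisfies $x_0 \in \mathcal{C}(0)$, so that $\bar{h}(0) \geq 0$, I conclude $\bar{h}(t) \geq 0$ for all $t > 0$.

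Next I would combine the two ingredients. At each time $t$, the definition of $\Tilde{\psi}(t)$ as the minimizer of $h(x(t), \cdot)$ over $\mathcal{S}_{k}$ gives $h(x(t), \psi) \geq h(x(t), \Tilde{\psi}(t)) = \bar{h}(t) \geq 0$ for every $\psi \in \mathcal{S}_{k}$. Hence the robot avoids collision with the obstacle in every configuration contained in the confidence set, which is exactly the claimed guarantee.

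The step I expect to be the main obstacle is justifying the well-posedness of $\partial h/\partial t$ for $\bar{h}$, because $\Tilde{\psi}(t)$ is produced by a parametric inner minimization and may fail to be classically differentiable in $t$ when the active minimizer switches. I would address this by relying on the numerical first-order approximation in \eqref{eq:dh_dt_estimate}, which only requires evaluating $h$ at $\Tilde{\psi}(t)$ and at the predicted worst-case configuration $\Tilde{\psi}(t + \Delta{t})$, thereby avoiding any appeal to classical differentiability; alternatively, a Danskin-type argument could be used to establish directional differentiability of the value function and to confirm that the lower-bound inequality is preserved along the trajectory.
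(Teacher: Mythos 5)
Your proposal is correct and follows essentially the same argument as the paper: the argmin definition of $\Tilde{\psi}$ gives the pointwise bound $h(x,\psi) \geq h(x,\Tilde{\psi})$ over $\mathcal{S}_k$, and the TVCBFQP constraint keeps $h(x,\Tilde{\psi}) \geq 0$ (equivalently $\alpha^\star \geq \beta$), so safety holds for every $\psi \in \mathcal{S}_k$. Your treatment is in fact more careful than the paper's two-line proof---explicitly invoking the comparison lemma and flagging the differentiability of the worst-case value function, which the paper silently glosses over---but it is the same route, not a different one.
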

\begin{proof}
Starting from~\eqref{eq:min_psi_opt}, $\forall \psi \in \mathcal{S}_{k}$ we have
\begin{align*}
    h(x, \Tilde{\psi}) &\leq h(x, \psi) &\ \text{from~\eqref{eq:min_psi_opt}}\\
    \alpha(x, \Tilde{\psi}) - \beta &\leq \alpha(x, \psi) - \beta &\ \text{from~\eqref{eq:diff_opt_cbf}}
\end{align*}
The solution of the TVCBFQP using $h(q, \Tilde{\psi})$ ensures that $h(x, \Tilde{\psi}) = \alpha(x, \Tilde{\psi}) - \beta \geq 0$, which ensures $h(x, \psi) = \alpha(x, \psi) - \beta \geq 0$. Thus, safety is guaranteed $\forall\psi \in \mathcal{S}_{k}$.
\end{proof}

Thus, using the obstacle configuration $\tilde{\psi}$ guarantees safety with probability $\mathrm{Prob}_{k}$. Note that this probability is only a lower bound since many obstacle configurations with a Mahalanobis distance greater than $k$ will have a larger $\alpha$ value than $\alpha(x, \Tilde{\psi})$. Since the optimization problem in~\eqref{eq:min_psi_opt} is computationally expensive to solve, we provide a heuristic-based approach to estimate $\tilde\psi$.

\begin{figure}[t!]
    \centering
    \includegraphics[width=0.45\textwidth]{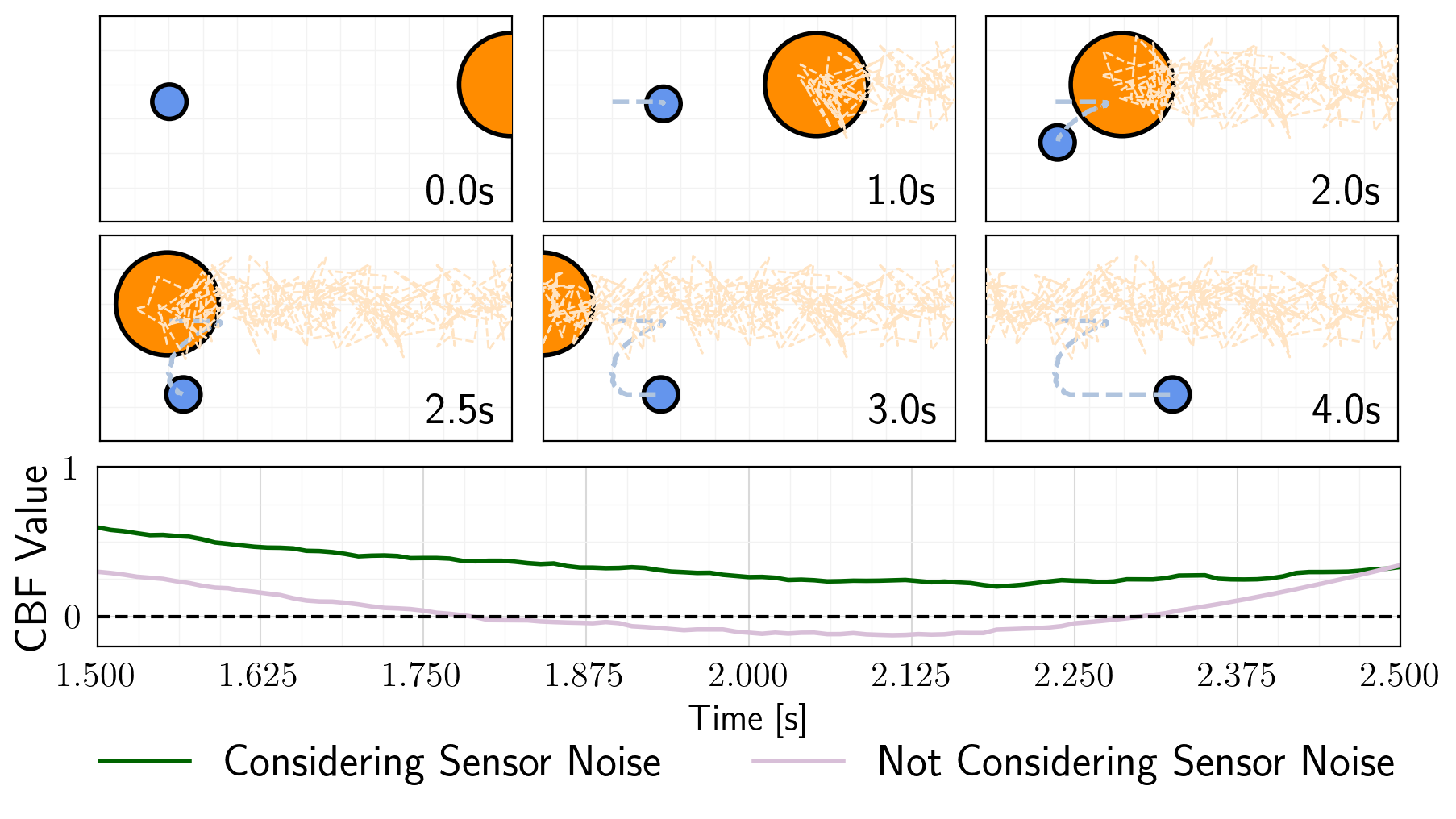}
    \caption{This figure illustrates the motion generated by the TVCBFQP controller under sensor noise. The top two rows show motion snapshots generated by the controller that considers sensor noise. For the motion snapshots, the color scheme follows Fig.~\ref{fig:moving_circle_visualization} with the only difference being the light orange dashed lines representing the estimated obstacle trajectory. The bottom plot compares the CBF value for two controllers, one considering sensor noise using the proposed method and the other not considering sensor noise. The bottom plot only shows from $[1.5, 2.5]~\si{s}$ since the CBF values are positive for the rest of the time.}
    \label{fig:moving_circle_noisy_visualization}
\end{figure}

\begin{assume}
\label{assume:orientation}
    We assume the contribution of the rotational motion to the velocity is negligible compared to the translational motion.
\end{assume}

\begin{rem}
    Assumption~\ref{assume:orientation} holds in many robotic applications, such as self-driving cars and robotic manipulation. If Assumption~\ref{assume:orientation} does not hold, our proposed method can still be applied by using a bounding shape that encapsulates the region covered when the obstacle is rotating in place.
\end{rem}

\noindent Let $\mathcal{N}(\mu_p, \Sigma_p)$ and $\mathcal{N}(\mu_q, \Sigma_q)$ denote the position distribution and quaternion distribution, respectively. Define the surface of the ellipsoid that has points with $d_M = k$ with respect to $\mathcal{N}(\mu_p, \Sigma_p)$ as $K_M$. Define $h_r$ as
\begin{equation}
    h_r = \frac{\partial h}{\partial r}\Big/\Big\|\frac{\partial h}{\partial r}\Big\| = \begin{bmatrix}
        h_{r, x} & h_{r, y} & h_{r, z}
    \end{bmatrix}^T \in \mathbb{R}^3
\end{equation}
with $r\in\mathbb{R}^3$ representing the positional terms. For a line that starts from $\mu_p$ and goes along the direction of $h_r$, we define its intersection with $K_M$ as $p_D$
\begin{equation}
\label{eq:p_danger}
    p_D = \mu_p + \frac{kh_r}{\sqrt{h_r^T\Sigma_p^{-1}h_r}} \in \mathbb{R}^3.
\end{equation}
Then, we can write the TVCBF constraint as
\begin{equation}
    \frac{\partial h}{\partial x}(x, \psi_d(t))\dot{x} + \frac{\partial h}{\partial t}(x, \psi_d(t)) \geq -\gamma h(x, \psi_d(t))
    \label{eq:measurement_noise_tvcbf}
\end{equation}
where $\psi_D(t) = (p_D(t), \mu_q(t))$. To show the usefulness of this approach in handling measurement noise, we use the Moving Circles example in Section~\ref{sec:method-tvcbf}, but with an additive Gaussian noise $\mathcal{N}(0, 0.5)$ applied to the position measurement on both the $x$ and $y$ axes and we set $\mathcal{U}$ to $\mathbb{R}^2$. This noise level is much larger than normally experienced during real robotic experiments. From Fig.~\ref{fig:moving_circle_noisy_visualization}, it can be seen that the robot maintains safety when using the proposed method to consider sensor noise and is unsafe when measurement noise is not considered. In general, the noisier the measurement, the larger $k$ should be. One can tune the $k$ value by starting with $k = 1$ and increasing or decreasing it based on empirical performance. Note that~\eqref{eq:measurement_noise_tvcbf} can be seen as moving the obstacle closer to the robot, which is different from enlarging the obstacle.

\subsection{Actuation Limits}
\label{sec:method-actuation}
\begin{figure}[t!]
    \centering
    \includegraphics[width=0.45\textwidth]{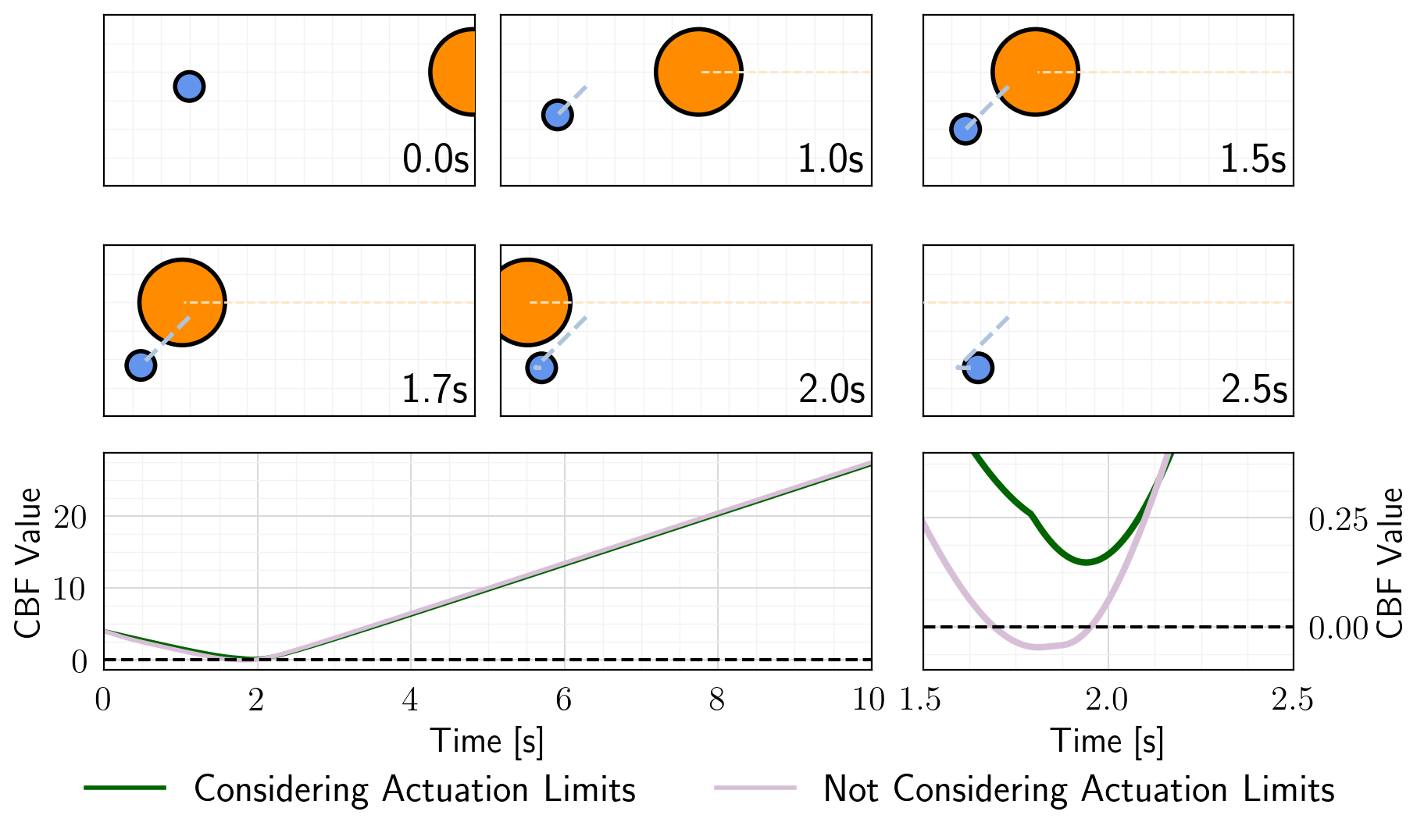}
    \caption{This figure illustrates the motion generated by the TVCBFQP controller under actuation limits. The top two rows show snapshots of the motion generated when actuation limits are considered. The color scheme follows Fig.~\ref{fig:moving_circle_visualization}. The bottom plots compare the CBF value for two controllers, one considering actuation limits using the proposed method in Section~\ref{sec:method-actuation}, and the other not considering actuation limits. The bottom right plot zooms in on the bottom left plot between time $[1.5, 2.5]$s to show that the controller not considering actuation limits leads to unsafe behavior.}
    \label{fig:moving_circle_actuation_visualization}
\end{figure}
Robotic systems cannot realize arbitrarily large joint velocity commands and have a limited tracking bandwidth. Thus, safety cannot be guaranteed if the TVCBFQP sends large joint velocity commands or requires the joint velocity to change rapidly. Under the assumption in Remark~\ref{rem:obstacle_motion}, both large and rapidly changing joint velocity commands are different consequences of the same issue: the robot did not act early enough. One can use MPC to plan a path with respect to a preview horizon to tackle the aforementioned issue. However, embedding our TVCBF constraint in an MPC will require solving~\eqref{eq:differentiable_collision} for each knot point at every iteration, which creates a nonlinear-bilevel optimization problem that cannot be solved in real-time. Alternatively, we propose to inflate the obstacle based on the relative velocity of the robot (or robot segment) and the obstacle. Define the robot velocity as $v_r = \dot{p}_r$. The obstacle's velocity computed in a frame aligned with the inertial frame but attached to the robot (or robot segment) is $\bar{v}_o = v_o - v_r$. Projecting $\bar{v}_o$ onto the unit vector pointing from the obstacle to the robot yields
\begin{equation}
    a_v = \bar{v}_o^T\frac{(p_r - p_o)}{\|(p_r - p_o)\|} = (v_o - v_r)^T\frac{(p_r - p_o)}{\|(p_r - p_o)\|} \in \mathbb{R}.
\end{equation}
For a geometric shape $B$, define the shape obtained after uniformly scaling it with a scaling factor $s$ as $B(s)$. Then, on the right-hand-side of the TVCBFQP constraint, instead of using~\eqref{eq:diff_opt_cbf} as the TVCBF, we use
\begin{figure*}[t!]
    \centering
    \includegraphics[width=0.95\textwidth]{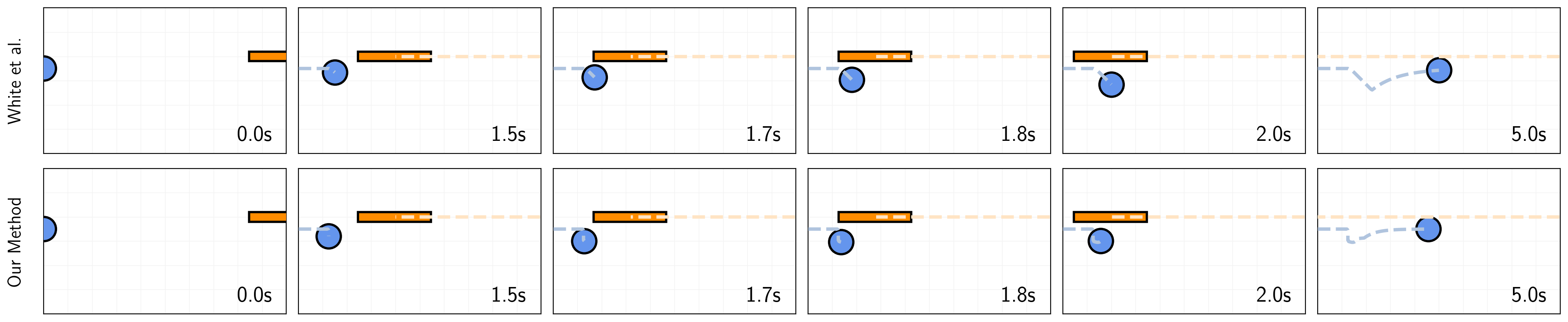}
    \caption{This figure illustrates the motion generated by our proposed controller and the controller in~\cite{WhiteJWH22} on the moving rectangle task described in Section~\ref{sec:experiments-rectangle}. The color scheme follows Fig.~\ref{fig:moving_circle_visualization}. Comparing the figures in the 3rd - 6th columns shows that the MPC-based method (top row) generates a much larger collision avoidance maneuver than our proposed approach (bottom row).}
    \label{fig:moving_rectangle_visualization}
\end{figure*}
\begin{equation}
\label{eq:expanded_tvcbf}
    h(x, \psi_B) = \alpha^\star(x, \psi_{B(s_a)}) - \beta,\ s_a = \max\{1, ba_v\}
\end{equation}
where the value of $b \in \mathbb{R}_+$ is determined by the robot's control authority and can be tuned to make the system more (make $b$ larger) or less (make $b$ smaller) conservative. The lower the control authority of the robot, the larger $b$ should be. One can start with $b = 1$ and tune its value based on empirical performances. Similar to the previous sections, we demonstrate this approach using the Moving Circles example. We use the same problem setting as in Section~\ref{sec:method-tvcbf} but set $\mathcal{U} = \{(u_x, u_y) \mid u_x, u_y \in [-1, 1]\}$ and $b = 1$. The motion generated is shown in Fig.~\ref{fig:moving_circle_actuation_visualization}. It can be seen that the proposed scaling factor inflation scheme helps ensure the safety of the robot.
\section{Experiments}
\label{sec:experiments}
In this section, we show the simulation and experimental results of utilizing our proposed method on robotic systems. First, we compare our approach with the approach proposed in~\cite{WhiteJWH22} on dynamic rectangular obstacle avoidance tasks. Then, we show the efficacy of our approach on the FR3 robot both in simulation and in real life. All experiments are performed on a PC with 32GB of RAM and an Intel Core i7 11700 processor. The optimization problems are solved using \texttt{ProxSuite}~\cite{BambadeKTC22} and \texttt{Pinocchio}~\cite{CarpentierSBMLS19} is used to compute the kinematics and dynamics terms. The two hyperparameters in our TVCBFQP controller are $\gamma$ and $\beta$. Generally, the larger the value of $\beta$ and the smaller the value of $\gamma$, the more conservative the TVCBFQP controller becomes. We set $\gamma = 1.0$ and $\beta = 1.03$ in all our experiments. The measurement noise covariance matrix is chosen based on the datasheet for the Vicon cameras and known beforehand for the simulated experiments. The process model noise covariance matrix is empirically tuned to settle a compromise between the smoothing effects and the lag it would induce. 

\begin{figure*}[t!]
    \centering
    \includegraphics[width=0.95\textwidth]{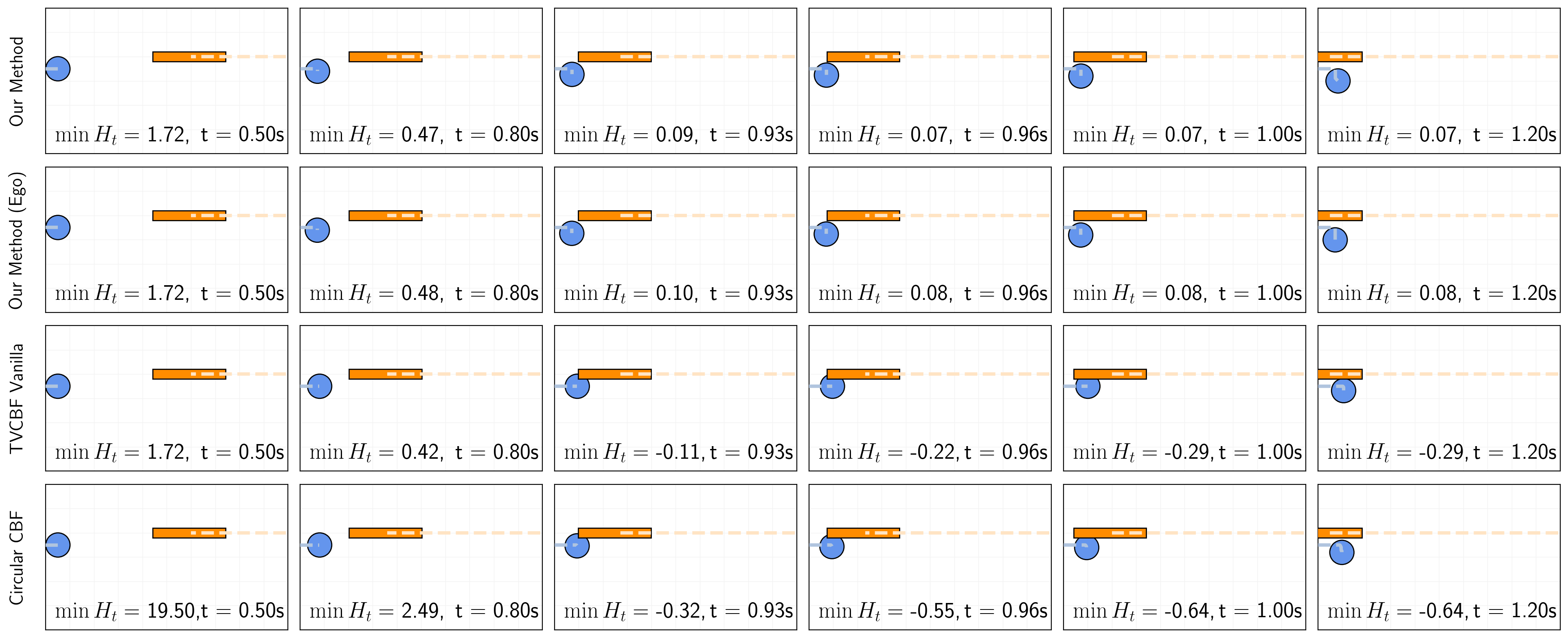}
    \caption{This figure illustrates the motion generated by our proposed controller and other CBF-based controllers. The first row shows the performance of our proposed method with world frame measurements. The second row shows the performance of our proposed method with body frame (egocentric) measurements. The third row shows the performance of the proposed diffOpt-based TVCBFQP controller without the measurement noise and actuation limit consideration. The fourth row shows the performance where the CBF is computed between the robot and three circles with a radius of 0.5 that covers the obstacle. The set $H_t$ represents the set of recorded CBF values in the period $[0, t]$, i.e., $H_t = \{h(t_0), h(t_0 + \Delta{t}), \cdots, h(t)\}$.} 
    \label{fig:moving_rectangle_visualization2}
\end{figure*}

\subsection{Dynamic Rectangular Obstacle Avoidance}
\label{sec:experiments-rectangle}
The previous Moving Circles example models both the robot and the obstacle as circles, which can be sufficiently dealt with using other methods, e.g.,~\cite{WhiteJWH22}. In this section, we replace the circular obstacle with a rectangular obstacle to show the effectiveness of our approach in handling non-ellipsoid-like obstacles. The rectangular obstacle has size  $3~\si{m} \times 0.4~\si{m} \times 2~\si{m}$. Since the motion is limited to the $xy$ plane, the size of the obstacle along the $z$ axis does not affect the generated motion. For the rectangular obstacle, the constraint in~\eqref{eq:differentiable_collision} has the form of~\cite{TracyHM22}
\begin{equation}
\label{eq:polygon_conic_constraint}
    \mathbf{A}\mathbf{R}^T r - \begin{bmatrix}
        \mathbf{A}\mathbf{R}^T & -\mathbf{b}
    \end{bmatrix}\begin{bmatrix}
        p & \alpha
    \end{bmatrix}^T \in \mathbb{R}_+
\end{equation}
where $\mathbf{A} \in \mathbb{R}^{6\times3}$ and $\mathbf{b} \in \mathbb{R}^6$ represents the halfspace constraints and $\mathbf{R} \in \mathrm{SO}(3)$ represents the orientation of the rectangular obstacle using a rotation matrix. The robot is modeled as a sphere with a radius $0.5~\si{m}$. The obstacle starts from position $[5.0, 0.0, 0.0]^T~\si{m}$ and moves with a fixed velocity of $[-4.0, 0.0, 0.0]^T~\si{m/s}$. The robot's task is to start from position $(-5.0, -0.5, 0.0)~\si{m}$ and reach the target point located at $(20.0, -0.5, 0.0)~\si{m}$ while avoiding collision with the rectangular obstacle. The robot dynamics is the same as~\eqref{eq:circle_robot_dyn}. The robot velocity along the $x$ and $y$ axis are both limited to be between $[-1, 1]~\si{m/s}$, i.e., $\mathcal{U} = \{(v_x, v_y) \mid v_x, v_y \in [-1, 1]\}$. The performance controller is a proportional controller with $K_p = 2.0$. The TVCBF with both sensor noise and actuation limits has the form of
\begin{equation}
\label{eq:combined_tvcbf}
    h(x, \psi_{D(s_a)}(t)) = \alpha^\star(x, \psi_{D(s_a)}(t)) - \beta
\end{equation}
with $\psi_{D(\cdot)}(t)$ defined as in Section~\ref{sec:measurement_noise}, $s_a$ defined as in Section~\ref{sec:method-actuation}, $k = 1.0$, $b = 1.0$, and a measurement noise of $\mathcal{N}(0.0, 0.05)$ is added along both the $x$ and $y$ measurements.

For our implementation of~\cite{WhiteJWH22}, the time horizon is $T = 1.5~\si{s}$, the sample time $\Delta{t} = 50~\si{ms}$, $d_\mathrm{risk} = d_\mathrm{obs} = 1.5~\si{m}$, $w_\mathrm{target} = 0.1$, $w_\mathrm{effort} = 0.1$, and $w_\mathrm{avoid} = 10.0$. The signed distances are computed using \texttt{hpp-fcl}~\cite{MontautLPSC22}. The motions generated by our method and~\cite{WhiteJWH22} are shown in Fig.~\ref{fig:moving_rectangle_visualization}. We see the motion generated by~\cite{WhiteJWH22} is more conservative than our method. This is due to~\cite{WhiteJWH22} modeling both the obstacle and the robot as spheres, which is different from the actual geometry and causes the generated motion to be conservative. Although $d_\mathrm{obs}$ can be tuned to generate less conservative avoidance maneuvers, this does not fundamentally solve the problem of only being able to model spherical geometries. Thus, our method's ability to model geometries using a variety of primitive shapes~\cite{TracyHM22} makes it applicable to a wider range of tasks. On average, the computation time for our proposed method is 0.29 ms, while it is 2.9~\si{ms} for the MPC-based method. Since the time complexity of linear MPCs is $O(n^3)$~\cite{BV2014}, the computation time may get prohibitively large for higher dimensional systems. Also, it would require a longer horizon if the obstacle is faster, which also increases the solution time.

Comparisons with other CBF-based methods are also performed. In Fig.~\ref{fig:moving_rectangle_visualization2}, we compare the effectiveness of our approach with three other settings: our proposed method, but the positions of the robot and the obstacle are measured in the robot's body frame; only using the diffOpt TVCBF without considering measurement noise and actuation limits; encapsulating the obstacle with three circles and compute the CBF as $h = (p_o - p_r)^T(p_o - p_r) - (R_r + R_c)^2$, with $R_c = 0.5$. From Fig.~\ref{fig:moving_rectangle_visualization2} we see that both our proposed method and the egocentric version of our proposed method can avoid collision. However, if the measurement noise and actuation limits are not considered, the motion generated by the diffOpt TVCBFQP and the circular TVCBFQP collides with the obstacle. In the four experiments shown in Fig.~\ref{fig:moving_rectangle_visualization2}, we set $v_o = [-8, 0, 0]^T \si{m/s}$, $b = 4.0$, and inherit all the other settings from the experiment shown in Fig.~\ref{fig:moving_rectangle_visualization}.

\subsection{FR3 Robotic Manipulator}
\begin{figure*}[t!]
    \centering
    \includegraphics[width=0.93\textwidth]{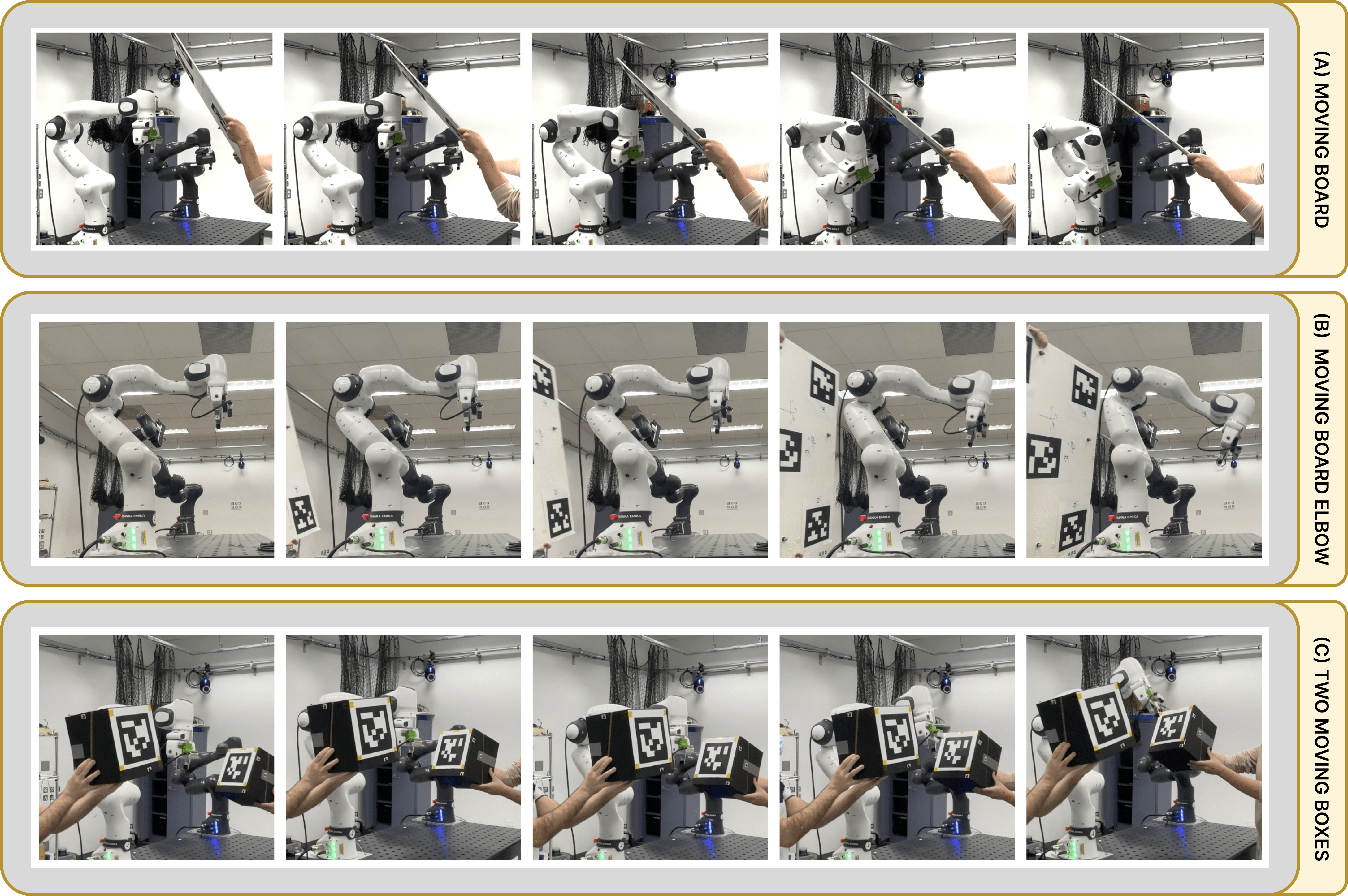}
    \caption{Snapshots of the motion generated by the TVCBFQP of the FR3 robot avoiding a moving board and two boxes. The timestamp of the snapshots increases from left to right. (A) The moving board approaches the FR3 robot manipulator from the top. (B) The moving board approaches the FR3 robot manipulator from behind. In both cases A and B, the FR3 robot retracts from the board to avoid collision. (C) The two boxes approach the FR3 robot manipulator from the bottom two sides. Since its end-effector is closer to the boxes, the FR3 robot moves the end-effector upward and away to avoid collision with the boxes. Besides these snapshots, more scenarios of the obstacles approaching the robot can be found at \url{https://youtu.be/iMmWIgvqgTU}.}
    \label{fig:snapshots}
\end{figure*}
In this section, we apply our proposed method to dynamic obstacle avoidance tasks on a 7-DOF FR3 robot manipulator. The experimental setup is shown in Fig.~\ref{fig:snapshots}. For joint velocity control, the system dynamics have the form of an integrator system, i.e., $\dot{q} = u$, with $q\in\mathbb{R}^7$ representing the joint positions. To utilize our method, we encapsulate the robot links with capsules and the end-effector with a sphere. The poses of the obstacles are measured using an array of 10 Vicon Valkyrie motion capture cameras. Then, the future position and orientation of the obstacles at the next time step are predicted using a quaternion-based EKF (Q-EKF) with a constant velocity model. The control loop runs at 100 Hz. The control limits for the FR3 robot can be found at \url{https://frankaemika.github.io/docs/}.

\subsubsection{Avoiding a Moving Board}
\label{sec:board_exp}
The task is for the robotic arm to avoid a moving board with size $0.9~\si{m}\times0.6~\si{m}\times0.02~\si{m}$. The board is modeled as a polygon and uses the same constraint as~\eqref{eq:polygon_conic_constraint}. Define a nominal joint configuration $\bar{q} \in \mathbb{R}^7$. The performance controller is a proportional-derivative (PD) controller that makes the robot return to $\bar{q}$
\begin{equation}
\label{eq:robot_pd_controller}
    u_\mathrm{ref} = K_p(\bar{q} - q) - K_d\dot{q}
\end{equation}
with $K_p, K_d \in \mathbb{R}^{7\times7}$ being diagonal matrices with the PD gains on their diagonal. The safe control is computed using~\eqref{eq:tvcbfqp}. The TVCBF constraints are computed between the links and the board. The TVCBF has the same form as~\eqref{eq:combined_tvcbf} with $k = 3.0$ and $b = 2.5$. Snapshots of the generated motion are given in Fig.~\ref{fig:snapshots}, the computed CBF values in Fig.~\ref{fig:cbf_real_robot}, and the corresponding minimum distances in Fig.~\ref{fig:exp_minimum_distance}. We see that as the board moves toward the robot with varying configurations, the proposed diffOpt TVCBFQP controller generates an avoidance maneuver that maintains a positive CBF value. On average, our proposed method takes 0.41~\si{ms} to compute the safe control.

\subsubsection{Avoiding Two Moving Boxes}
\label{sec:two_box_exp}
The task is to avoid two moving boxes with size $0.23~\si{m}\times0.25~\si{m}\times0.21~\si{m}$. The performance controller is in the same form as~\eqref{eq:robot_pd_controller}. The TVCBF is in the form of~\eqref{eq:combined_tvcbf} with $k = 3.0$ and $b = 1.0$. Snapshots of the generated motion are given in Fig.~\ref{fig:snapshots}, the CBF values with respect to the two boxes are shown in Fig.~\ref{fig:cbf_real_robot}, and the corresponding minimum distances are shown in Fig.~\ref{fig:exp_minimum_distance}. It can be seen that the proposed TVCBFQP controller can maintain safety with respect to the two moving boxes. On average, our proposed method takes 1.7~\si{ms} to compute the safe control.

\begin{rem}
    We also tested obtaining the obstacle pose using April tags~\cite{Olson11}, which yielded similar performances on the experiments in Section~\ref{sec:board_exp} and Section~\ref{sec:two_box_exp}. The main benefit of the Vicon system compared to April tags is that it enables more dynamic motion of the obstacles since the April tags would occasionally get occluded.
\end{rem}

In the two scenarios presented, since the joint measurements always contain noise, the results show that even without explicit consideration, our proposed approach is robust to such noise/error levels.

\begin{figure*}[t!]
    \centering
    \includegraphics[width=0.95\textwidth]{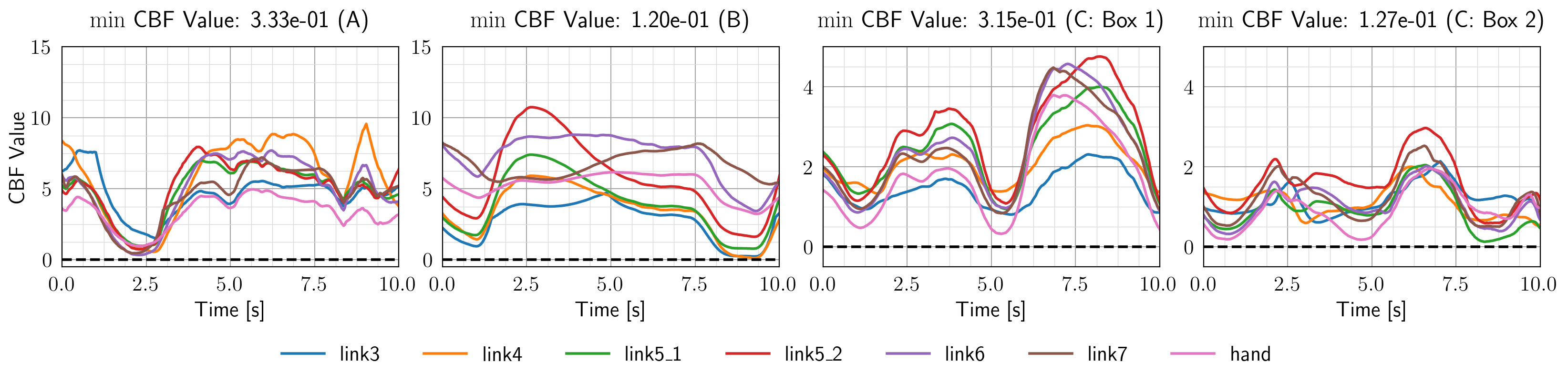}
    \vspace{-1em}
    \caption{CBF values for the robot experiments. The black dashed line represents zero CBF value. Since the CBF curves are above the dashed lines, it shows the robot avoided collision with the board/boxes. We did not include links 1 and 2 since they did not move relative to the robot's base. The A, B, and C correspond to scenarios depicted in Fig.~\ref{fig:snapshots}.}
    \label{fig:cbf_real_robot}
\end{figure*}
\begin{figure*}[t!]
    \centering
    \includegraphics[width=0.95\textwidth]{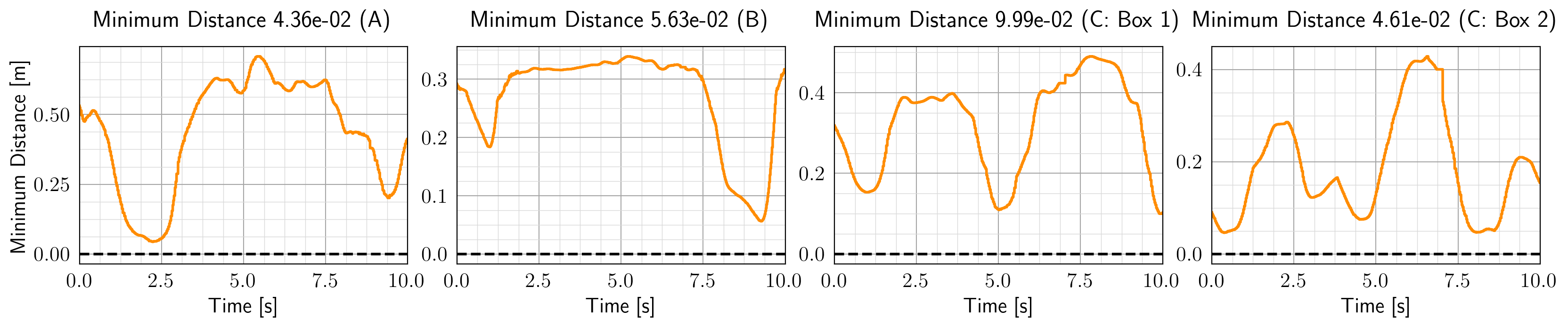}
    \caption{Minimum distance between the board/boxes and the robot. The black dashed line represents zero distance. The A, B, and C correspond to Fig.~\ref{fig:snapshots}.}
    \label{fig:exp_minimum_distance}
\end{figure*}
\section{Conclusion}
\label{sec:conclusion}

In this paper, we extended the usage of differentiable optimization based CBFs to time-varying safe sets and applied it to dynamic obstacle avoidance tasks. We proposed modifications to the TVCBFQP that make its performance robust under sensor noise and actuation limits. In a simulation environment, we have compared our method with a state-of-the-art MPC-based method and other CBF-based methods for dynamic obstacle avoidance. Given our method's ability to handle a wide range of geometric shapes when the obstacle geometry is significantly different from an ellipsoid, it was shown that our proposed method can ensure safety and generate less conservative motions. The efficacy of our approach is further tested on a 7-DOF FR3 robotic manipulator for three dynamic obstacle avoidance tasks. Besides the experimental results shown in Section~\ref{sec:experiments} more scenarios of the obstacles approaching the robot can be found at \url{https://youtu.be/iMmWIgvqgTU}. In the future, we plan to include characterizations on the type of obstacle motion our method can avoid and also explore its integration with MPC-based approaches.

\section{Acknowledgement}

We thank Pranay Gupta for his help with the experiments.
\bibliographystyle{IEEEtran}
\bibliography{IEEEabrv, refs.bib}

\begin{thebibliography}{10}
\providecommand{\url}[1]{#1}
\csname url@samestyle\endcsname
\providecommand{\newblock}{\relax}
\providecommand{\bibinfo}[2]{#2}
\providecommand{\BIBentrySTDinterwordspacing}{\spaceskip=0pt\relax}
\providecommand{\BIBentryALTinterwordstretchfactor}{4}
\providecommand{\BIBentryALTinterwordspacing}{\spaceskip=\fontdimen2\font plus
\BIBentryALTinterwordstretchfactor\fontdimen3\font minus \fontdimen4\font\relax}
\providecommand{\BIBforeignlanguage}[2]{{%
\expandafter\ifx\csname l@#1\endcsname\relax
\typeout{** WARNING: IEEEtran.bst: No hyphenation pattern has been}%
\typeout{** loaded for the language `#1'. Using the pattern for}%
\typeout{** the default language instead.}%
\else
\language=\csname l@#1\endcsname
\fi
#2}}
\providecommand{\BIBdecl}{\relax}
\BIBdecl

\bibitem{DaiKPK23}
B.~Dai, P.~Krishnamurthy, A.~Papanicolaou, and F.~Khorrami, ``State constrained stochastic optimal control for continuous and hybrid dynamical systems using {DFBSDE},'' \emph{Automatica}, vol. 155, p. 111146, 2023.

\bibitem{RawlingsMD17}
J.~B. Rawlings, D.~Q. Mayne, and M.~Diehl, \emph{Model predictive control: theory, computation, and design}.\hskip 1em plus 0.5em minus 0.4em\relax Nob Hill Publishing, Madison, WI, 2017, vol.~2.

\bibitem{DaiKPK21}
B.~Dai, P.~Krishnamurthy, A.~Papanicolaou, and F.~Khorrami, ``State constrained stochastic optimal control using {LSTM}s,'' in \emph{Proceedings of the American Control Conference, New Orleans, LA}, May 2021, pp. 1294--1299.

\bibitem{AmesCENST19}
A.~D. Ames, S.~Coogan, M.~Egerstedt, G.~Notomista, K.~Sreenath, and P.~Tabuada, ``Control barrier functions: Theory and applications,'' in \emph{Proceedings of the European Control Conference, Naples, Italy}, June 2019, pp. 3420--3431.

\bibitem{AmesXGT17}
A.~D. Ames, X.~Xu, J.~W. Grizzle, and P.~Tabuada, ``Control barrier function based quadratic programs for safety critical systems,'' \emph{{IEEE} Transactions on Automatic Control}, vol.~62, no.~8, pp. 3861--3876, 2017.

\bibitem{DaiKK22}
B.~Dai, P.~Krishnamurthy, and F.~Khorrami, ``Learning a better control barrier function,'' in \emph{Proceedings of the {IEEE} Conference on Decision and Control, Canc\'{u}n, Mexico}, December 2022, pp. 945--950.

\bibitem{DaiHKK23}
B.~Dai, H.~Huang, P.~Krishnamurthy, and F.~Khorrami, ``Data-efficient control barrier function refinement,'' in \emph{Proceedings of the American Control Conference, San Diego, CA}, May 2023, pp. 3675--3680.

\bibitem{MurtazaAWH22}
M.~A. Murtaza, S.~Aguilera, M.~Waqas, and S.~Hutchinson, ``Safety compliant control for robotic manipulator with task and input constraints,'' \emph{{IEEE} Robotics and Automation Letters}, vol.~7, no.~4, pp. 10\,659--10\,664, 2022.

\bibitem{XuS18}
B.~Xu and K.~Sreenath, ``Safe teleoperation of dynamic {UAV}s through control barrier functions,'' in \emph{Proceedings of the {IEEE} International Conference on Robotics and Automation, Brisbane, Australia}, May 2018, pp. 7848--7855.

\bibitem{SingletaryKA22}
A.~Singletary, S.~Kolathaya, and A.~D. Ames, ``Safety-critical kinematic control of robotic systems,'' \emph{{IEEE} Control Systems Letters}, vol.~6, pp. 139--144, 2022.

\bibitem{ThirugnanamZS22}
A.~Thirugnanam, J.~Zeng, and K.~Sreenath, ``Duality-based convex optimization for real-time obstacle avoidance between polytopes with control barrier functions,'' in \emph{Proceedings of the American Control Conference, Atlanta, GA}, June 2022, pp. 2239--2246.

\bibitem{SingletaryGMSA22}
A.~Singletary, W.~Guffey, T.~G. Moln{\'{a}}r, R.~W. Sinnet, and A.~D. Ames, ``Safety-critical manipulation for collision-free food preparation,'' \emph{{IEEE} Robotics and Automation Letters}, vol.~7, no.~4, pp. 10\,954--10\,961, 2022.

\bibitem{DaiKKGTK23}
B.~Dai, R.~Khorrambakht, P.~Krishnamurthy, V.~Gon{\c{c}}alves, A.~Tzes, and F.~Khorrami, ``Safe navigation and obstacle avoidance using differentiable optimization based control barrier functions,'' \emph{{IEEE} Robotics and Automation Letters}, vol.~8, no.~9, pp. 5376--5383, 2023.

\bibitem{FioriniS98}
P.~Fiorini and Z.~Shiller, ``Motion planning in dynamic environments using velocity obstacles,'' \emph{The International Journal of Robotics Research}, vol.~17, no.~7, pp. 760--772, 1998.

\bibitem{FoxBT97}
D.~Fox, W.~Burgard, and S.~Thrun, ``The dynamic window approach to collision avoidance,'' \emph{{IEEE} Robotics and Automation Magazine}, vol.~4, no.~1, pp. 23--33, 1997.

\bibitem{WhiteJWH22}
J.~White, D.~Jay, T.~Wang, and C.~Hubicki, ``Avoiding dynamic obstacles with real-time motion planning using quadratic programming for varied locomotion modes,'' in \emph{Proceedings of the {IEEE/RSJ} International Conference on Intelligent Robots and Systems, Kyoto, Japan}, October 2022, pp. 13\,626--13\,633.

\bibitem{JianYLLLWL23}
Z.~Jian, Z.~Yan, X.~Lei, Z.~Lu, B.~Lan, X.~Wang, and B.~Liang, ``Dynamic control barrier function-based model predictive control to safety-critical obstacle-avoidance of mobile robot,'' in \emph{Proceedings of the {IEEE} International Conference on Robotics and Automation, London, UK}, May 2023, pp. 3679--3685.

\bibitem{TracyHM22}
K.~Tracy, T.~A. Howell, and Z.~Manchester, ``Differentiable collision detection for a set of convex primitives,'' in \emph{Proceedings of the {IEEE} International Conference on Robotics and Automation, London, United Kingdom}, May 2023, pp. 3663--3670.

\bibitem{GilbertO94}
E.~G. Gilbert and C.~J. Ong, ``New distances for the separation and penetration of objects,'' in \emph{Proceedings of the {IEEE} International Conference on Robotics and Automation, San Diego, CA}, May 1994, pp. 579--586.

\bibitem{OngG96}
C.~J. Ong and E.~G. Gilbert, ``Growth distances: new measures for object separation and penetration,'' \emph{{IEEE} Transactions on Robotics and Automation}, vol.~12, no.~6, pp. 888--903, 1996.

\bibitem{Dini07}
U.~Dini, \emph{Lezioni di analisi infinitesimale}.\hskip 1em plus 0.5em minus 0.4em\relax Stabilimento Tipografico Successori Fratelli Nistri, Pisa, Italy, 1907, vol.~1.

\bibitem{Xu18}
X.~Xu, ``Constrained control of input-output linearizable systems using control sharing barrier functions,'' \emph{Automatica}, vol.~87, pp. 195--201, 2018.

\bibitem{Mahalanobis18}
P.~C. Mahalanobis, ``On the generalized distance in statistics,'' \emph{Sankhy{\=a}: The Indian Journal of Statistics, Series A (2008-)}, vol.~80, pp. S1--S7, 2018.

\bibitem{BambadeKTC22}
A.~Bambade, S.~E. Kazdadi, A.~Taylor, and J.~Carpentier, ``{PROX-QP:} yet another quadratic programming solver for robotics and beyond,'' in \emph{Proceedings of Robotics: Science and Systems, New York City, NY}, June 2022.

\bibitem{CarpentierSBMLS19}
J.~Carpentier, G.~Saurel, G.~Buondonno, J.~Mirabel, F.~Lamiraux, O.~Stasse, and N.~Mansard, ``The {P}inocchio {C++} library : {A} fast and flexible implementation of rigid body dynamics algorithms and their analytical derivatives,'' in \emph{Proceedings of the {IEEE/SICE} International Symposium on System Integration, Paris, France}, January 2019, pp. 614--619.

\bibitem{MontautLPSC22}
L.~Montaut, Q.~Lidec, V.~Petrík, J.~Sivic, and J.~Carpentier, ``Collision detection accelerated: An optimization perspective,'' in \emph{Proceedings of Robotics: Science and Systems, New York City, NY}, June 2022.

\bibitem{BV2014}
S.~P. Boyd and L.~Vandenberghe, \emph{Convex Optimization}.\hskip 1em plus 0.5em minus 0.4em\relax Cambridge University Press, Cambridge, U.K., 2014.

\bibitem{Olson11}
E.~Olson, ``{A}pril{T}ag: {A} robust and flexible visual fiducial system,'' in \emph{Proceedings of the {IEEE} International Conference on Robotics and Automation, Shanghai, China}, May 2011, pp. 3400--3407.

\end{thebibliography}
\end{document}